\newtheorem{thm}{Theorem}
\newcommand{\diag}{\mathrm{diag}}
\newcommand{\N}{\mathbb{N}}
\def\eqref#1{equation~\ref{#1}}
\def\1{\bm{1}}
\def\vtheta{{\bm{\theta}}}
\def\vc{{\bm{c}}}
\def\vh{{\bm{h}}}
\def\vo{{\bm{o}}}
\def\vx{{\bm{x}}}
\def\vy{{\bm{y}}}
\def\mD{{\bm{D}}}
\def\mF{{\bm{F}}}
\def\mI{{\bm{I}}}
\def\mM{{\bm{M}}}
\def\mP{{\bm{P}}}
\def\mQ{{\bm{Q}}}
\def\mS{{\bm{S}}}
\def\mU{{\bm{U}}}
\def\mW{{\bm{W}}}
\def\mX{{\bm{X}}}
\def\mPhi{{\bm{\Phi}}}
\def\mLambda{{\bm{\Lambda}}}
\DeclareMathAlphabet{\mathsfit}{\encodingdefault}{\sfdefault}{m}{sl}
\SetMathAlphabet{\mathsfit}{bold}{\encodingdefault}{\sfdefault}{bx}{n}
\def\gG{{\mathcal{G}}}
\newcommand{\R}{\mathbb{R}}
\def\mPhi{{\bm{\Phi}}}
\def\mPsi{{\bm{\Psi}}}
\def\mTheta{{\bm{\Theta}}}
\renewcommand\paragraph[1]{\subsubsection{#1}}
\renewcommand\citep[1]{\cite{#1}}
\renewcommand\citet[1]{\cite{#1}}
\title{Data-driven Learning of Geometric Scattering Modules for GNNs}
\name{
    Alexander Tong$^{1 *}$%
    \quad Frederick Wenkel$^{2 *}$%
    \quad Kincaid Macdonald$^{3}$%
    \quad Smita Krishnaswamy$^{4, 1 \dagger}$%
    \quad Guy Wolf$^{2 \dagger}$\thanks{$^{*}$ Equal contribution. $^{\dagger}$ Equal senior author contribution. This research was partially funded by IVADO grant PRF-2019-3583139727, FRQNT grant 299376, CIFAR AI Chair [\emph{G.W.}]; CZI grants 182702 \& CZF2019-002440 [\emph{S.K.}]; and NIH grants R01GM135929 \& R01GM130847 [\emph{G.W., S.K.}]. The content provided here is solely the responsibility of the authors and does not necessarily represent the official views of the funding agencies. Correspondence to: \texttt{smita.krishnaswamy@yale.edu} and \texttt{guy.wolf@umontreal.ca}}%
}
\address{%
    $^{1}$ Yale University, Dept. of Comp. Sci.; $^{3}$ Dept. of Math.; $^{4}$ Dept. of Genetics, New Haven, CT, USA \\%
    $^{2}$ Universit\'{e} de Montr\'{e}al, Dept. of Math. \& Stat.; Mila -- Quebec AI Institute, Montreal, QC, Canada\\%
}
\begin{document}

\maketitle

\begin{abstract} 
We propose a new graph neural network (GNN) module, based on relaxations of recently proposed geometric scattering transforms, which consist of a cascade of graph wavelet filters. Our learnable geometric scattering (LEGS) module enables adaptive tuning of the wavelets to encourage band-pass features to emerge in learned representations. The incorporation of our LEGS-module in GNNs enables the learning of longer-range graph relations compared to many popular GNNs, which often rely on encoding graph structure via smoothness or similarity between neighbors. Further, its wavelet priors result in simplified architectures with significantly fewer learned parameters compared to competing GNNs. We demonstrate the predictive performance of LEGS-based networks on graph classification benchmarks, as well as the descriptive quality of their learned features in biochemical graph data exploration tasks.

\end{abstract}

\begin{keywords}
Geometric deep learning, graph neural networks, geometric scattering
\end{keywords}
\section{Introduction}\label{sec: introduction}
Geometric deep learning has recently emerged as an increasingly prominent branch of deep learning~\citep{bronstein_geometric_2017}. At the core of geometric deep learning is the use of graph neural networks (GNNs) in general, and graph convolutional networks (GCNs) in particular, which ensure neuron activations follow the geometric organization of input data by propagating information across graph neighborhoods~\citep{kipf_semi-supervised_2016, hamilton_inductive_2017, xu_how_2019}
. However, recent work has shown the difficulty in generalizing these methods to more complex structures, identifying common problems and phrasing them in terms of oversmoothing~\citep{li2018deeper}, oversquashing~\citep{alon2020bottleneck} or underreaching~\citep{barcelo2020logical}. 


Recently, an alternative approach was presented to provide deep geometric representation learning by generalizing Mallat's scattering transform \citep{mallat_group_2012}, originally proposed to provide a mathematical framework for understanding convolutional neural networks, to graphs~\citep{gao2019geometric, gama2019diffusion, zou_graph_2019} and manifolds~\citep{perlmutter_geometric_2018}. 
Similar to traditional scattering, which can be seen as a convolutional network with non-learned wavelet filters, geometric scattering is defined as a GNN with handcrafted graph filters,  constructed as diffusion wavelets over the input graph~\citep{coifman_diffusion_2006-1}, which are then cascaded with pointwise absolute-value nonlinearities.
The efficacy of geometric scattering features in graph processing tasks was demonstrated in~\citet{gao2019geometric}, with both supervised learning and data exploration applications. Moreover, their handcrafted design enables rigorous study of their properties, such as stability to deformations and perturbations, and provides a clear understanding of the information extracted by them, which by design (e.g., the cascaded band-pass filters) goes beyond low frequencies to consider richer notions of regularity~\citep{gama_stability_2019, perlmutter2019understanding}. 

However, while geometric scattering transforms provide effective universal feature extractors, their handcrafted design does not allow the automatic task-driven representation learning that is so successful in traditional GNNs and neural networks in general. Here, we combine both frameworks by incorporating richer multi-frequency band features from geometric scattering into GNNs, while allowing them to be flexible and trainable. We introduce the geometric scattering module, which can be used within a larger neural network.
We call this a {\em learnable geometric scattering (LEGS) module} and show it inherits properties from the scattering transform while allowing the scales of the diffusion to be learned.
\section{Preliminaries: Geometric Scattering}\label{sect_geometric scattering} 

Let $\gG = (V,E,w)$ be a weighted graph with $V\coloneqq \{v_1,\dots,v_n\}$ the set of nodes, $E\subset \{\{v_i, v_j\}\in V\times V , i\neq j\}$ the set of (undirected) edges and $w : E \to (0,\infty)$ assigning (positive) edge weights to the graph edges.
We define a \textit{graph signal} as a function $x: V \rightarrow \R$ on the nodes of $\gG$ and aggregate them in a signal vector $\vx\in \R^n$ with the $i^{th}$ entry being $x(v_i)$.
We define the \textit{weighted adjacency matrix} $\mW\in\R^{n\times n}$ of $\gG$ as
    $\mW[v_i,v_j] \coloneqq
    w(v_i,v_j) \text{ if } \{v_i,v_j\}\in E, 
    \text{ and } 0 \text{ otherwise}$
and the \textit{degree matrix} $\mD\in\R^{n\times n}$ of $\gG$ as $\mD\coloneqq \diag(d_1,\dots, d_n)$ with $d_i\coloneqq \deg(v_i)\coloneqq \sum_{j=1}^n \mW[v_i,v_j]$ the \textit{degree} of node $v_i$.

The geometric scattering transform~\citep{gao2019geometric} consists of a cascade of graph filters constructed from a left stochastic diffusion matrix $\mP \coloneqq \frac{1}{2} \big( \mI_n + \mW \mD^{-1} \big)$, which corresponds to transition probabilities of a lazy random walk Markov process. The laziness of the process signifies that at each step it has equal probability of staying at the current node or transitioning to a neighbor.
Scattering filters are defined via graph-wavelet matrices $\mPsi_j\in\R^{n\times n}$ of order $j\in\N_0$, as
\begin{align}\label{eq_wavelet matrix}
    \mPsi_0 &\coloneqq \mI_n - \mP, \nonumber \\
    \mPsi_j &\coloneqq \mP^{2^{j-1}} - \mP^{2^j} = \mP^{2^{j-1}} \big( \mI_n - \mP^{2^{j-1}} \big), \quad j\geq 1.
\end{align}
These diffusion wavelet operators partition the frequency spectrum into dyadic frequency bands, which are then organized into a full wavelet filter bank $\mathcal{W}_J\coloneqq\{\mPsi_j, \mPhi_J\}_{0\leq j\leq J}$, where $\mPhi_J\coloneqq \mP^{2^J}$ is a pure low-pass filter, similar to the one used in GCNs. It is easy to verify that the resulting wavelet transform is invertible, since a simple sum of filter matrices in $\mathcal{W}_J$ yields the identity. Moreover, as discussed in~\citet{perlmutter2019understanding}, this filter bank forms a nonexpansive frame, which provides energy preservation guarantees as well as stability to perturbations, and can be generalized to a wider family of constructions that encompasses the variations of scattering transforms on graphs from~\citet{gama2019diffusion,gama_stability_2019} and~\citet{zou_graph_2019}.

Given the wavelet filter bank $\mathcal{W}_J$, node-level scattering features are computed by stacking cascades of bandpass filters and element-wise absolute value nonlinearities to form 
\begin{equation}\label{eq_scattering (node) features}
     \mU_p \vx \coloneqq \mPsi_{j_m} \vert \mPsi_{j_{m-1}} \dots \vert \mPsi_{j_2} \vert \mPsi_{j_1}\vx\vert \vert \dots \vert,
\end{equation}
parameterized by the scattering path $p \coloneqq (j_1, \dots, j_m)\in \cup_{m \in \N} \N_0^{m}$ that determines the filter scales of each wavelet. Whole-graph representations are obtained by aggregating node-level features via statistical moments over the nodes of the graph~\citep{gao2019geometric}, which yields the geometric scattering features
\begin{equation}\label{eq_scattering (graph) featrues}
    \mS_{p,q} \vx \coloneqq \sum_{i=1}^n \vert \mU_p \vx [v_i] \vert^q,
\end{equation}
indexed by the scattering path $p$ and moment order $q$. Finally, we note that it can be shown that the graph-level scattering transform $\mS_{p,q}$ guarantees node-permutation invariance, while $\mU_{p}$ is permutation equivariant~\citep{perlmutter2019understanding,gao2019geometric}.

\section{Adaptive Geom. Scattering Relaxation}\label{sect_theory} 

The geometric scattering construction, described in Sec.~\ref{sect_geometric scattering}, can be seen as a particular GNN architecture with handcrafted layers, rather than learned ones. This provides a solid mathematical framework for understanding the encoding of geometric information in GNNs~\citep{perlmutter2019understanding}, while also providing effective unsupervised graph representation learning for data exploration, which also has advantages in supervised learning tasks~\citep{gao2019geometric}.
While the handcrafted design in~\citet{perlmutter2019understanding} and \citet{gao2019geometric} is not a priori amenable to task-driven tuning provided by end-to-end GNN training, we note that the cascade in Eq.~\ref{eq_scattering (graph) featrues} does conform to a neural network architecture suitable for backpropagation.
Therefore, in this section, we show how and under what conditions a relaxation of the selection of the scales preserves some of the useful mathematical properties established in~\citet{perlmutter2019understanding}.

We replace the handcrafted dyadic scales in Eq.~\ref{eq_wavelet matrix} with an adaptive monotonic sequence of integer diffusion time scales $0 < t_1 < \cdots < t_J$, which are selected via training. The adaptive filter bank $\mathcal{W}_J^\prime \coloneqq\{\mPsi_j^\prime, \mPhi_J^\prime\}_{j=0}^{J-1}$, contains wavelets
\begin{align}\label{eq_adaptive wavelet matrices}
    \mPsi_0^\prime &\coloneqq  \mI_n - \mP^{t_1}, \quad \mPhi_J^\prime \coloneqq \mP^{t_J},
     \\
    \mPsi_j^\prime &\coloneqq \mP^{t_j} - \mP^{t_{j+1}}, \quad 1 \leq j \leq J-1. \nonumber
\end{align}
The following theorem shows that for any selection of scales, the relaxed construction of $\mathcal{W}_J^\prime$ yields a nonexpansive frame, similar to the result from~\citet{perlmutter2019understanding} shown for the original handcrafted construction.
\begin{thm}\label{thm:frame}
There exists a constant $C > 0$ that only depends on $t_1$ and $t_J$ such that for all $\vx\in L^2(\gG,\mD^{-1/2})$,
$$
    C \Vert \vx \Vert_{\mD^{-\frac{1}{2}}}^2 \leqslant \Vert \mPhi_J^\prime \vx \Vert_{\mD^{-\frac{1}{2}}}^2 + \sum_{j=0}^J\Vert \mPsi_j^\prime \vx \Vert_{\mD^{-\frac{1}{2}}}^2 \leqslant \Vert \vx \Vert_{\mD^{-\frac{1}{2}}}^2 ,
$$
where the norm is the one induced by the space $L^2(\gG,\mD^{-1/2})$.
\end{thm}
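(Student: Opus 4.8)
The plan is to diagonalize the problem by symmetrizing the diffusion operator, which turns the matrix frame inequality into a one-variable estimate over the spectrum. First I would conjugate $\mP$ by $\mD^{-1/2}$ to form the symmetric matrix $\gM \coloneqq \mD^{-1/2}\mP\,\mD^{1/2} = \tfrac{1}{2}\big(\mI_n + \mD^{-1/2}\mW\mD^{-1/2}\big)$. Since $\mD^{-1/2}\mW\mD^{-1/2}$ is the symmetric normalized adjacency, whose spectrum lies in $[-1,1]$ (equivalently, the normalized Laplacian $\mI_n - \mD^{-1/2}\mW\mD^{-1/2}$ is positive semidefinite with eigenvalues at most $2$), the lazy averaging forces $\gM$ to be symmetric positive semidefinite with eigenvalues in $[0,1]$. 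This nonnegativity is precisely what the laziness $\tfrac12(\mI_n+\cdot)$ buys, and it is essential to the monotonicity argument below.

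Next I would pass to the Euclidean picture via the substitution $\vy \coloneqq \mD^{-1/2}\vx$, which is an isometry from $L^2(\gG,\mD^{-1/2})$ onto $(\R^n,\Vert\cdot\Vert_2)$ because $\Vert\vx\Vert_{\mD^{-1/2}}^2 = \Vert\mD^{-1/2}\vx\Vert_2^2 = \Vert\vy\Vert_2^2$. Every filter in $\mathcal{W}_J'$ is a polynomial in $\mP$, and conjugation sends $\mP^{t}$ to $\gM^{t}$, so $\mD^{-1/2}\mPhi_J'\vx = \gM^{t_J}\vy$, $\mD^{-1/2}\mPsi_0'\vx = (\mI_n - \gM^{t_1})\vy$, and $\mD^{-1/2}\mPsi_j'\vx = (\gM^{t_j} - \gM^{t_{j+1}})\vy$. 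Writing the spectral decomposition $\gM = \sum_k \lambda_k\,\phi_k\phi_k^{\top}$ with $\lambda_k\in[0,1]$ and expanding $\vy$ in the eigenbasis, the entire middle expression of the theorem separates across eigenvalues. The frame inequality thus reduces to the scalar claim that, for every $\lambda\in[0,1]$,
\[
  C \;\le\; g(\lambda) \;\coloneqq\; (1-\lambda^{t_1})^2 + \sum_{j=1}^{J-1}(\lambda^{t_j}-\lambda^{t_{j+1}})^2 + \lambda^{2t_J} \;\le\; 1 .
\]

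The key simplification is to set $a_0\coloneqq 1$ and $a_j\coloneqq\lambda^{t_j}$; since $0\le\lambda\le1$ and the $t_j$ are increasing integers, the sequence is nonincreasing, $1=a_0\ge a_1\ge\cdots\ge a_J\ge 0$. The consecutive gaps $u_0\coloneqq 1-a_1,\dots,u_{J-1}\coloneqq a_{J-1}-a_J,\ u_J\coloneqq a_J$ are nonnegative and telescope to $\sum_{i=0}^J u_i = 1$, so $g(\lambda)=\sum_{i=0}^J u_i^2$. The upper bound is then immediate: for nonnegative numbers summing to one, $\sum_i u_i^2 \le (\sum_i u_i)^2 = 1$, giving the nonexpansive half (upper frame bound $1$) for all $\lambda$ at once.

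The main obstacle is the lower bound, because the naive estimate $\sum_i u_i^2 \ge (J+1)^{-1}$ yields a constant that degrades with the number of scales, whereas the theorem demands $C=C(t_1,t_J)$ independent of $J$. I would resolve this by discarding all but the two boundary terms: since every summand is nonnegative, $g(\lambda)\ge u_0^2+u_J^2=(1-\lambda^{t_1})^2+\lambda^{2t_J}$. This function depends only on $t_1$ and $t_J$, is continuous on the compact interval $[0,1]$, and cannot vanish, since $(1-\lambda^{t_1})^2=0$ forces $\lambda=1$ while $\lambda^{2t_J}=0$ forces $\lambda=0$. It therefore attains a strictly positive minimum, and setting $C\coloneqq\min_{\lambda\in[0,1]}\big[(1-\lambda^{t_1})^2+\lambda^{2t_J}\big]>0$ finishes the lower bound with a constant depending only on $t_1$ and $t_J$. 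Undoing the isometry returns the stated inequality in $L^2(\gG,\mD^{-1/2})$.
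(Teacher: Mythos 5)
Your proof is correct and follows essentially the same route as the paper's: conjugating $\mP$ to its symmetric form $\mD^{-1/2}\mP\mD^{1/2}$, passing to the Euclidean picture by an isometric change of variables, reducing the frame inequality to a scalar estimate in $\lambda\in[0,1]$, telescoping the nonnegative gaps $u_i$ to get the upper bound $\sum_i u_i^2 \le \big(\sum_i u_i\big)^2 = 1$, and retaining only the boundary terms $(1-\lambda^{t_1})^2 + \lambda^{2t_J}$ to obtain $C = \min_{0\le\lambda\le 1}\big[(1-\lambda^{t_1})^2+\lambda^{2t_J}\big] > 0$ depending only on $t_1$ and $t_J$. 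The only differences are cosmetic improvements: you explicitly justify two facts the paper leaves implicit, namely that laziness places the spectrum of the symmetrized operator in $[0,1]$, and that $C>0$ because the two boundary terms cannot vanish at the same $\lambda$.
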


\begin{proof}
Note that $\mP$ has a symmetric conjugate $\mM\coloneqq\mD^{-1/2} \mP\mD^{1/2}$ with eigendecomposition $\mM = \mQ \mLambda \mQ^T$ for orthogonal $\mQ$.
Given this decomposition, we can write
\begin{align}
    \mPhi_J^\prime &= \mD^{1/2} \mQ \mLambda^{t_J} \mQ^T \mD^{-1/2}, \nonumber \\
    \mPsi_j^\prime &= \mD^{1/2} \mQ (\mLambda^{t_j} - \mLambda^{t_{j+1}}) \mQ^T \mD^{-1/2}, \quad 0 \leq j \leq J-1, \nonumber
\end{align}
where we set $t_0 = 0$ to simplify notations. Therefore, we have
\begin{equation*}
\|\mPhi_J^\prime \vx \|_{\mD^{-1/2}}^2 = \langle \mPhi_J^\prime \vx, \mPhi_J^\prime \vx \rangle_{\mD^{-1/2}} = \|\mLambda^{t_J} \mQ^T \mD^{-1/2} \vx\|_2^2.
\end{equation*}
If we consider a change of variable to $\vy = \mQ^T \mD^{-1/2} \vx$, we have $\|\vx\|_{\mD^{-1/2}}^2 = \|\mD^{-1/2}\vx\|_2^2 = \|\vy\|_2^2$, while $\|\mPhi_J^\prime \vx \|_{\mD^{-1/2}}^2 = \|\Lambda^{t_J}\vy\|_2^2$. Similarly, we can also reformulate the operations of the other filters in terms of diagonal matrices applied to $\vy$ as $\|\mPsi_j^\prime \vx \|_{\mD^{-1/2}}^2 = \|(\Lambda^{t_j} - \Lambda^{t_{j+1}})\vy\|_2^2$. 

Given these reformulations, we can now write
\begin{multline*}
    \|\Lambda^{t_J}\vy\|_2^2 + \sum_{j=0}^{J-1}\|(\Lambda^{t_j} - \Lambda^{t_{j+1}})\vy\|_2^2 = \\
    \sum_{i=1}^n \vy_i^2 \cdot \left(\lambda^{2 t_J} + \sum\nolimits_{j=0}^{J-1}(\lambda_i^{t_j} - \lambda_i^{t_{j+1}})^2\right).
\end{multline*}
Since $0 \leq \lambda_i \leq 1$ and $0 = t_0 < t_1 < \cdots < t_J$, we have
$$
\lambda_i^{2 t_J} + \sum_{j=0}^{J-1}(\lambda_i^{t_j} - \lambda_i^{t_{j+1}})^2 \leq \left(\lambda_i^{t_J} + \sum_{j=0}^{J-1}\lambda_i^{t_j} - \lambda_i^{t_{j+1}}\right)^2
= 1,
$$
which yields the upper bound $\|\Lambda^{t_J}\vy\|_2^2 + \sum_{j=0}^{J-1}\|(\Lambda^{t_j} - \Lambda^{t_{j+1}})\vy\|_2^2 \leq \|\vy\|_2^2$.
On the other hand, since $t_1 > 0 = t_0$,
$$
\lambda_i^{2 t_J} + \sum_{j=0}^{J-1}(\lambda_i^{t_j} - \lambda_i^{t_{j+1}})^2 \geq \lambda_i^{2 t_J} + (1 - \lambda_i^{t_1})^2,
$$ 
and thus, setting $C \coloneqq \min_{0 \leq \xi \leq 1} (\xi^{2 t_J} + (1 - \xi^{t_1})^2) > 0$,
we get the lower bound $\|\Lambda^{t_J}\vy\|_2^2 + \sum_{j=0}^{J-1}\|(\Lambda^{t_j} - \Lambda^{t_{j+1}})\vy\|_2^2 \geq C \|\vy\|_2^2$. Applying the reverse change of variable to $\vx$ and $L^2(\gG,\mD^{-1/2})$ yields the result of the theorem.
\end{proof}

Intuitively, the upper (i.e., nonexpansive) frame bound implies stability in the sense that small perturbations in the input graph signal will only result in small perturbations in the representation extracted by the constructed filter bank. Further, the lower frame bound ensures certain energy preservation by the constructed filter bank, thus indicating the nonexpansiveness is not implemented in a trivial fashion (e.g., by constant features independent of input signal). 

The following theorem establishes that any such configuration, extracted from $\mathcal{W}^\prime_J$ via Eq.~\ref{eq_scattering (node) features}-\ref{eq_scattering (graph) featrues}, is permutation equivariant at the node-level and permutation invariant at the graph level. This guarantees that the extracted (in this case learned) features indeed encode intrinsic graph geometry rather than a priori indexation.
\begin{thm}\label{thm:permutation}
Let $\mU_p^\prime$ and $\mS_{p,q}^\prime$ be defined as in Eq.~\ref{eq_scattering (node) features} and \ref{eq_scattering (graph) featrues} (correspondingly), with the filters from $\mathcal{W}_J^\prime$ with an arbitrary configuration $0 < t_1 < \cdots < t_J$. Then, for any permutation $\Pi$ over the nodes of $\gG$, and any graph signal $\vx\in L^2(\gG,\mD^{-1/2})$ we have $
\mU_p^\prime \Pi \vx = \Pi \mU_p^\prime \vx$ and $\mS_{p,q}^\prime \Pi \vx = \mS_{p,q}^\prime \vx$, for $p \in \cup_{m \in \N} \N_0^{m}, q \in \N$, where geometric scattering implicitly considers here the node ordering supporting its input signal.
\end{thm}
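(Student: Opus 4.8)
The plan is to track how a relabeling of the nodes propagates through every operation in the cascade, reducing the whole statement to a single algebraic fact: a permutation matrix $\Pi$ is orthogonal, so $\Pi^T\Pi = \mI_n$, together with the observation that the absolute-value nonlinearity acts entrywise. Throughout I interpret $\mU_p^\prime \Pi \vx$ as in the closing remark of the theorem, namely the scattering cascade whose filters are built from the relabeled graph, applied to the relabeled signal $\Pi\vx$.

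First I would record how the basic graph operators transform under $\Pi$. Relabeling the node set sends $\mW \mapsto \Pi \mW \Pi^T$ and $\mD \mapsto \Pi \mD \Pi^T$ (a permutation merely reorders the diagonal degrees, so $\mD^{-1} \mapsto \Pi \mD^{-1}\Pi^T$ as well). Using $\Pi^T\Pi = \mI_n$ inside the product $\mW\mD^{-1}$ then gives the diffusion matrix $\mP \mapsto \Pi \mP \Pi^T$. Since conjugation by an orthogonal matrix commutes with taking powers, $(\Pi\mP\Pi^T)^t = \Pi \mP^t \Pi^T$ for every $t\in\N$, and therefore each relaxed wavelet from $\mathcal{W}_J^\prime$, being a difference of two such powers (with $\mPhi_J^\prime$ a single power), transforms by the same conjugation: $\mPsi_j^\prime \mapsto \Pi \mPsi_j^\prime \Pi^T$ and $\mPhi_J^\prime \mapsto \Pi \mPhi_J^\prime \Pi^T$.

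Next I would treat the nonlinearity and prove node-level equivariance by induction on the path length $m$. Because $\vert\cdot\vert$ is applied coordinatewise and $\Pi$ only permutes coordinates, $\vert \Pi \vy \vert = \Pi \vert \vy \vert$ for all $\vy\in\R^n$. For $m=1$, applying the relabeled filter to the relabeled signal gives $(\Pi \mPsi_{j_1}^\prime \Pi^T)(\Pi\vx) = \Pi \mPsi_{j_1}^\prime \vx$, and the leading absolute value passes through $\Pi$. In the inductive step each additional filter contributes a factor $\Pi(\cdot)\Pi^T$; its trailing $\Pi^T$ cancels against the $\Pi$ emitted by the previous stage via $\Pi^T\Pi = \mI_n$, and the intervening nonlinearity again commutes with $\Pi$, so the entire cascade collapses to $\mU_p^\prime \Pi \vx = \Pi \mU_p^\prime \vx$. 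Graph-level invariance is then immediate: $\mS_{p,q}^\prime \Pi \vx = \sum_{i=1}^n \vert (\Pi \mU_p^\prime \vx)[v_i]\vert^q$, and since summing $\vert\cdot\vert^q$ over all nodes is insensitive to their order, the $\Pi$ drops out to yield $\mS_{p,q}^\prime \vx$.

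The only step demanding genuine care is the induction: one must verify that the conjugations telescope correctly and, in particular, that the entrywise nonlinearity inserted between filters does not obstruct the cancellation $\Pi^T\Pi=\mI_n$. This is exactly where the interaction between the linear relabeling and the nonlinear modulus is tested; everything else is bookkeeping built on orthogonality of $\Pi$ and the fact that these structural identities hold for \emph{any} admissible scale configuration $0<t_1<\cdots<t_J$, so no property of the dyadic scales is needed.
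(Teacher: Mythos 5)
Your proposal is correct and follows essentially the same route as the paper's proof: interpreting $\mU_p^\prime \Pi \vx$ via the operators on the relabeled graph, showing each wavelet transforms by conjugation $\mPsi_j^\prime \mapsto \Pi \mPsi_j^\prime \Pi^T$, using $\vert \Pi \vy\vert = \Pi\vert\vy\vert$ to push $\Pi$ through the cascade inductively, and concluding graph-level invariance from the permutation-insensitivity of the sum $\sum_i \vert\cdot\vert^q$. The only difference is cosmetic --- you derive $\mP \mapsto \Pi\mP\Pi^T$ explicitly from the transformations of $\mW$ and $\mD$, a step the paper leaves implicit.
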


\begin{proof}
Denote the permutation group on $n$ elements as $S_n$. For a permutation $\Pi \in S_n$ we let $\overline{\gG} = \Pi(\gG)$ be the graph obtained by permuting the vertices of $\gG$ with $\Pi$. The corresponding permutation operation on a graph signal $\vx \in L^2(\gG,\mD^{-1/2})$ gives a signal $\Pi \vx \in L^2(\overline{\gG}, \mD^{-1/2})$, which we implicitly considered in the statement of the theorem, without specifying these notations for simplicity. Rewriting the statement of the theorem more rigorously with the introduced notations, we aim to show that $\overline{\mU}_p^\prime \Pi \vx = \Pi \mU_p^\prime \vx$ and $\overline{\mS}_{p,q}^\prime \Pi \vx = \mS_{p,q}^\prime \vx$ under suitable conditions, where the operation $\mU_p^\prime$ from $\gG$ on the permuted graph $\overline{\gG}$ is denoted here by $\overline{\mU}_p^\prime$ and likewise for $\mS_{p,q}^\prime$ we have $\overline{\mS}_{p,q}^\prime$. 

We start by showing $\mU_p^\prime$ is permutation equivariant. First, we notice that for any $\Psi_j$, $0 < j < J$ we have that $\overline{\Psi}_j \Pi \vx = \Pi \Psi_j \vx$, as for $1 \le j \le J - 1$,
$$
    \overline{\mPsi}_j \Pi \vx 
    = (\Pi \mP^{t_j} \Pi^T - \Pi\mP^{t_{j+1}}\Pi^T) \Pi \vx
    = \Pi \mPsi_j \vx,
$$
with similar reasoning for $j\in \{0, J\}$. Note that the element-wise absolute value yields $\vert \Pi \vx \vert = \Pi \vert \vx \vert$ for any permutation matrix $\Pi$. These two observations inductively yield 
\begin{align*}
    \overline{\mU}_p^\prime \Pi\vx =& \overline{\mPsi}_{j_m}^\prime \vert \overline{\mPsi}_{j_{m-1}}^\prime \dots \vert \overline{\mPsi}_{j_2}^\prime \vert \overline{\mPsi}_{j_1}^\prime \Pi\vx\vert \vert \dots \vert \\
    =&  \overline{\mPsi}_{j_m}^\prime \vert \overline{\mPsi}_{j_{m-1}}^\prime \dots \vert \overline{\mPsi}_{j_2}^\prime \Pi \vert \mPsi_{j_1}^\prime \vx\vert \vert \dots \vert =\dots= \Pi \mU_p^\prime \vx. 
\end{align*}
To show $\mS_{p,q}^\prime$ is permutation invariant, first notice that for any statistical moment $q > 0$, we have  $\vert \Pi \vx \vert^q = \Pi \vert \vx \vert^q$ and further as sums are commutative, $\sum_j (\Pi \vx)_j = \sum_j \vx_j$. We then have
$$
    \overline{\mS}_{p,q}^\prime \Pi \vx = \sum_{i=1}^n \vert \overline{\mU}_p^\prime \Pi \vx [v_i] \vert^q = \sum_{i=1}^n \vert \Pi \mU_p^\prime \vx [v_i] \vert^q
    = \mS_{p,q}^\prime \vx,
$$
which
completes the proof of the theorem.
\end{proof}

We note that the results in Theorems~\ref{thm:frame}-\ref{thm:permutation} and their proofs closely follow the theoretical framework proposed by~\citet{perlmutter2019understanding}. We carefully account here for the relaxed learned configuration, which replaces the original handcrafted one there.

\begin{figure*}[ht]
    \begin{center}
    \includegraphics[width=0.7\linewidth]{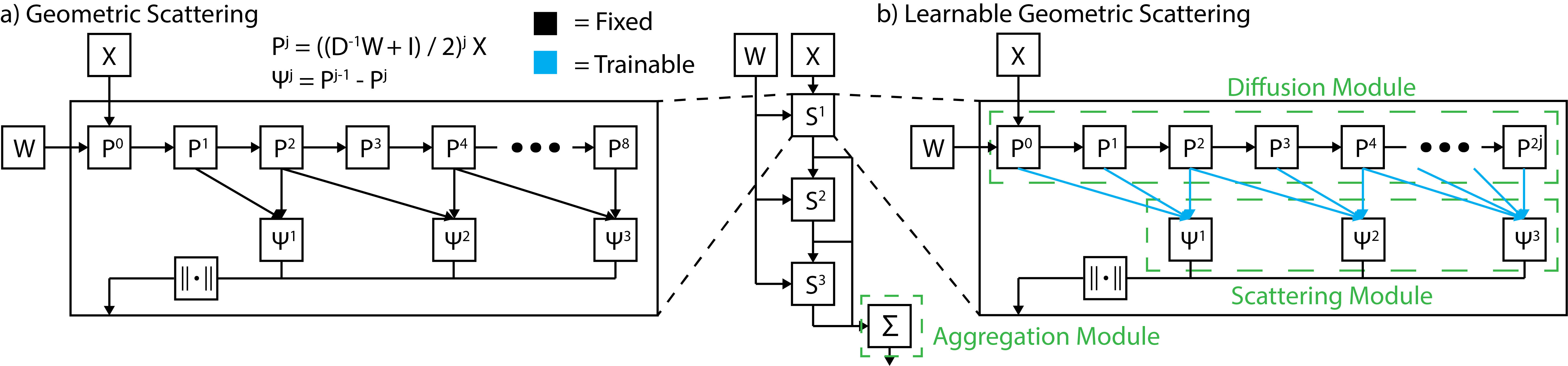}
    \end{center}
    \vspace{-5mm}
    \caption{LEGS module learns to select the appropriate scattering scales from the data.}
    \label{fig:arch}
    \vspace{-5mm}
\end{figure*}
\section{A Learnable Geom. Scattering Module}\label{sect_architecture} 


The adaptive geometric scattering construction presented in Sec.~\ref{sect_theory} is implemented here in a data-driven way via a backpropagation-trainable module. Throughout this section, we consider an input graph signal $\vx\in\R^n$ or, equivalently, a collection of graph signals $\mX\in\R^{n\times N_{\ell-1}}$. The forward propagation of these signals can be divided into three major submodules. First, a {\em diffusion submodule} implements the Markov process that forms the basis of the filter bank and transform. Then, a {\em scattering submodule} implements the filters and the corresponding cascade, while allowing the learning of the scales $t_1,\ldots,t_J$. Finally, the {\em aggregation module} collects the extracted features to provide a graph and produces the task-dependent output.

\vspace{2pt}\noindent\textbf{The diffusion submodule.} We build a set of $m\in\N$ subsequent diffusion steps of the signal $\vx$ by iteratively multiplying the diffusion matrix $\mP$ to the left of the signal, resulting in
$
    \left[ \mP \vx, \mP^2 \vx, \dots, \mP^m \vx \right].
$
Since $\mP$ is often sparse, for efficiency reasons these filter responses are implemented via an RNN structure consisting of $m$ RNN modules. Each module propagates the incoming hidden state $\vh_{t-1}, t = 1, \ldots, m$ with $\mP$ with the readout $\vo_t$ equal to the produced hidden state,
$
    \vh_t \coloneqq \mP \vh_{t-1}, \quad \vo_t \coloneqq \vh_t.
$


\vspace{2pt}\noindent\textbf{The scattering submodule.} Next, we consider the selection of $J\leq m$ diffusion scales for the flexible filter bank construction with wavelets defined according to~Eq.~\ref{eq_wavelet matrix relaxed}. We found this was the most influential part of the architecture. We experimented with methods of increasing flexibility:
\begin{enumerate*}
    \item Selection of $\{t_j\}_{j=1}^{J-1}$ as dyadic scales (as in Sec.~\ref{sect_geometric scattering} and Eq.~\ref{eq_wavelet matrix}), fixed for all datasets (LEGS-FIXED); and
    \item Selection of each $t_j$ using softmax and sorting by $j$, learnable per model (LEGS-FCN and LEGS-RBF, depending on output layer explained below).
\end{enumerate*}

\noindent For the scale selection, we use a selection matrix $\mF\in\R^{J\times m}$, where each row $\mF_{(j,\cdot)}, j = 1,\ldots,J$ is dedicated to identifying the diffusion scale of the wavelet $\mP^{t_j}$ via a one-hot encoding. This is achieved by setting $\mF
    \coloneqq \sigma(\mTheta) = [\sigma(\vtheta_1), \sigma(\vtheta_2), \ldots, \sigma(\vtheta_J)]^T,$
where $\vtheta_j \in\R^m$ constitute the rows of the trainable weight matrix $\mTheta$, and $\sigma$ is the softmax function. While this construction may not strictly guarantee an exact one-hot encoding, we assume that the softmax activations yield a sufficient approximation. Further, without loss of generality, we assume that the rows of $\mF$ are ordered according to the position of the leading ``one'' activated in every row. In practice, this can be easily enforced by reordering the rows. We now construct the filter bank $\widetilde{\mathcal{W}}_{\mF} \coloneqq\{\widetilde\mPsi_{j}, \widetilde\mPhi_{J}\}_{j=0}^{J-1}$ with the filters
\begin{align}\label{eq_wavelet matrix relaxed}
    \widetilde{\mPsi}_{0} \vx &= \vx - \sum_{t=1}^m \mF_{(1,t)} \mP^t \vx, \quad \widetilde{\mPhi}_{J} \vx = \sum_{t=1}^m \mF_{(J,t)} \mP^t \vx, \\
    \widetilde{\mPsi}_{j} \vx &= \sum_{t=1}^m \left[\mF_{(j,t)} \mP^t \vx - \mF_{(j+1,t)} \mP^t \vx \right], \quad 1 \leq j \leq J-1, \nonumber
\end{align}
matching and implementing the construction of $\mathcal{W}_J^\prime$ (Eq.~\ref{eq_adaptive wavelet matrices}).
While many approaches may be applied to aggregate node-level features into graph-level features such as max, mean, sum pooling, and the more powerful TopK~\citep{gao_graph_2019-1} or attention pooling~\citep{velickovic_graph_2018}, we follow the statistical-moment aggregation explained in Secs.~\ref{sect_geometric scattering}-\ref{sect_theory} motivated by \citet{gao2019geometric,perlmutter2019understanding} and leave exploration of other pooling methods to future work.



\subsection{Incorporating LEGS into a larger neural network} 

As shown in~\citet{gao2019geometric} on graph classification, this aggregation works particularly well in conjunction with support vector machines (SVMs) based on the radial basis function (RBF) kernel. Here, we consider two configurations for the task-dependent output layer of the network, either using two fully connected layers after the learnable scattering layers, which we denote LEGS-FCN, or using a modified RBF network~\citep{broomhead_radial_1988}, which we denote LEGS-RBF, to produce the final classification. 

The latter configuration more accurately processes scattering features as shown in Table~\ref{tab:full_table}. Our RBF network works by first initializing a fixed number of movable anchor points. Then, for every point, new features are calculated based on the radial distances to these anchor points. In previous work on radial basis networks these anchor points were initialized independent of the data. We found that this led to training issues if the range of the data was not similar to the initialization of the centers. Instead, we first use a batch normalization layer to constrain the scale of the features and then pick anchors randomly from the initial features of the first pass through our data. This gives an RBF-kernel network with anchors that are always in the range of the data. Our RBF layer is then $\text{RBF}(\vx) = \phi(\| \text{BatchNorm}(\vx) - \vc\|)$ with $\phi(\vx) = e^{-\|\vx\|^2}$.

\section{Empirical Results}\label{sect_results}
We investigate the LEGS module on whole graph classification and graph regression tasks that arise in a variety of contexts, with emphasis on the more complex biochemical datasets. Unlike other types of data, biochemical graphs do not exhibit the small-world structure of social graphs and may have large graph diameters for their size. Further, the connectivity patterns of biomolecules are very irregular due to 3D folding and long-range connections, and thus ordinary local node aggregation methods may miss such connectivity differences.
\subsection{Whole Graph Classification}
We perform whole graph classification by using eccentricity (max distance of a node to other nodes) and clustering coefficient (percentage of links between the neighbors of the node compared to a clique) as node features as are used in \citet{gao2019geometric}. We compare against graph convolutional networks (GCN)~\citep{kipf_semi-supervised_2016}, GraphSAGE~\citep{hamilton_inductive_2017}, graph attention network (GAT)~\citep{velickovic_graph_2018}, graph isomorphism network (GIN)~\citep{xu_how_2019}, Snowball network~\citep{luan_break_2019}, and fixed geometric scattering with a support vector machine classifier (GS-SVM) as in \citet{gao2019geometric}, and a baseline which is a 2-layer neural network on the features averaged across nodes (disregarding graph structure). These comparisons are meant to inform when including learnable graph scattering features are helpful in extracting whole graph features. Specifically, we are interested in the types of graph datasets where existing graph neural network performance can be improved upon with scattering features. We evaluate these methods across seven biochemical and six social network datasets for graph classification with hundreds to thousands of graphs and tens to hundreds of nodes.

\begin{table*}[t]
\caption{Mean $\pm$ std. over 10 test sets on biochemical (top) and social network (bottom) datasets.}
 \label{tab:full_table}
\centering
\scalebox{0.7}{
        \begin{tabular}{llllllllll}
\toprule
{} &                   LEGS-RBF &                   LEGS-FCN &         LEGS-FIXED &                        GCN &                  GraphSAGE &               GAT &                        GIN &                     GS-SVM &                   Baseline \\
\midrule
DD               &           72.58 $\pm$ 3.35 &           72.07 $\pm$ 2.37 &   69.09 $\pm$ 4.82 &           67.82 $\pm$ 3.81 &           66.37 $\pm$ 4.45 &  68.50 $\pm$ 3.62 &           42.37 $\pm$ 4.32 &           72.66 $\pm$ 4.94 &  \textbf{75.98 $\pm$ 2.81} \\
ENZYMES          &           36.33 $\pm$ 4.50 &  \textbf{38.50 $\pm$ 8.18} &   32.33 $\pm$ 5.04 &           31.33 $\pm$ 6.89 &           15.83 $\pm$ 9.10 &  25.83 $\pm$ 4.73 &           36.83 $\pm$ 4.81 &           27.33 $\pm$ 5.10 &           20.50 $\pm$ 5.99 \\
MUTAG            &           33.51 $\pm$ 4.34 &           82.98 $\pm$ 9.85 &  81.84 $\pm$ 11.24 &           79.30 $\pm$ 9.66 &          81.43 $\pm$ 11.64 &  79.85 $\pm$ 9.44 &           83.57 $\pm$ 9.68 &  \textbf{85.09 $\pm$ 7.44} &           79.80 $\pm$ 9.92 \\
NCI1             &  \textbf{74.26 $\pm$ 1.53} &           70.83 $\pm$ 2.65 &   71.24 $\pm$ 1.63 &           60.80 $\pm$ 4.26 &           57.54 $\pm$ 3.33 &  62.19 $\pm$ 2.18 &           66.67 $\pm$ 2.90 &           69.68 $\pm$ 2.38 &           56.69 $\pm$ 3.07 \\
NCI109           &  \textbf{72.47 $\pm$ 2.11} &           70.17 $\pm$ 1.46 &   69.25 $\pm$ 1.75 &           61.30 $\pm$ 2.99 &           55.15 $\pm$ 2.58 &  61.28 $\pm$ 2.24 &           65.23 $\pm$ 1.82 &           68.55 $\pm$ 2.06 &           57.38 $\pm$ 2.20 \\
PROTEINS         &           70.89 $\pm$ 3.91 &           71.06 $\pm$ 3.17 &   67.30 $\pm$ 2.94 &           74.03 $\pm$ 3.20 &           71.87 $\pm$ 3.50 &  73.22 $\pm$ 3.55 &  \textbf{75.02 $\pm$ 4.55} &           70.98 $\pm$ 2.67 &           73.22 $\pm$ 3.76 \\
PTC              &  \textbf{57.26 $\pm$ 5.54} &           56.92 $\pm$ 9.36 &   54.31 $\pm$ 6.92 &          56.34 $\pm$ 10.29 &           55.22 $\pm$ 9.13 &  55.50 $\pm$ 6.90 &           55.82 $\pm$ 8.07 &           56.96 $\pm$ 7.09 &           56.71 $\pm$ 5.54 \\
\midrule
COLLAB           &           75.78 $\pm$ 1.95 &           75.40 $\pm$ 1.80 &   72.94 $\pm$ 1.70 &           73.80 $\pm$ 1.73 &  \textbf{76.12 $\pm$ 1.58} &  72.88 $\pm$ 2.06 &           62.98 $\pm$ 3.92 &           74.54 $\pm$ 2.32 &           64.76 $\pm$ 2.63 \\
IMDB-BINARY      &           64.90 $\pm$ 3.48 &           64.50 $\pm$ 3.50 &   64.30 $\pm$ 3.68 &           47.40 $\pm$ 6.24 &           46.40 $\pm$ 4.03 &  45.50 $\pm$ 3.14 &           64.20 $\pm$ 5.77 &  \textbf{66.70 $\pm$ 3.53} &           47.20 $\pm$ 5.67 \\
IMDB-MULTI       &           41.93 $\pm$ 3.01 &           40.13 $\pm$ 2.77 &   41.67 $\pm$ 3.19 &           39.33 $\pm$ 3.13 &           39.73 $\pm$ 3.45 &  39.73 $\pm$ 3.61 &           38.67 $\pm$ 3.93 &  \textbf{42.13 $\pm$ 2.53} &           39.53 $\pm$ 3.63 \\
REDDIT-BINARY    &  \textbf{86.10 $\pm$ 2.92} &           78.15 $\pm$ 5.42 &   85.00 $\pm$ 1.93 &           81.60 $\pm$ 2.32 &           73.40 $\pm$ 4.38 &  73.35 $\pm$ 2.27 &           71.40 $\pm$ 6.98 &           85.15 $\pm$ 2.78 &           69.30 $\pm$ 5.08 \\
REDDIT-MULTI-12K &           38.47 $\pm$ 1.07 &           38.46 $\pm$ 1.31 &   39.74 $\pm$ 1.31 &  \textbf{42.57 $\pm$ 0.90} &           32.17 $\pm$ 2.04 &  32.74 $\pm$ 0.75 &           24.45 $\pm$ 5.52 &           39.79 $\pm$ 1.11 &           22.07 $\pm$ 0.98 \\
REDDIT-MULTI-5K  &           47.83 $\pm$ 2.61 &           46.97 $\pm$ 3.06 &   47.17 $\pm$ 2.93 &  \textbf{52.79 $\pm$ 2.11} &           45.71 $\pm$ 2.88 &  44.03 $\pm$ 2.57 &           35.73 $\pm$ 8.35 &           48.79 $\pm$ 2.95 &           36.41 $\pm$ 1.80 \\
\bottomrule
\end{tabular}
    }
\vspace{-3mm}
\end{table*}

\noindent\textbf{LEGS outperforms on biochemical datasets.} Most work on graph neural networks has focused on social networks which have a well-studied structure. However, biochemical graphs that represent molecules and tend to be overall smaller and less connected than social networks~\cite{gao2019geometric}. In particular, we find that LEGS outperforms other methods by a significant margin on biochemical datasets with relatively small but high diameter graphs (NCI1, NCI109, ENZYMES, PTC), as shown in Tab.~\ref{tab:full_table}. On extremely small graphs we find that GS-SVM performs best, which is expected as other methods with more parameters can easily overfit the data. We reason that the performance increase exhibited by LEGS module networks, and to a lesser extent GS-SVM, on these biochemical graphs is due the ability of geometric scattering to compute complex connectivity features via its multiscale diffusion wavelets. Thus, methods that rely on a scattering construction would in general perform better, with the flexibility and trainability of the LEGS module giving it an edge on most tasks. Additionally, LEGS performs well on social network datasets
In Tab.~\ref{tab:full_table}, we see that on the social network datasets LEGS performs well. Overlooking the fixed scattering transform GS-SVM, which was tuned in \citet{gao2019geometric} with a focus on these particular social network datasets, a LEGS module architecture is best on three out of the six social datasets and second best on the other three. 

\begin{table}[ht]
\caption{CASP GDT regression error over three seeds}
\centering
\begin{small}
\begin{tabular}{lll}
\toprule
($\mu \pm \sigma$) &                   Train MSE &                     Test MSE \\
\midrule
LEGS-FCN &  \textbf{134.34 $\pm$ 8.62} &  \textbf{144.14 $\pm$ 15.48} \\
LEGS-RBF  &           140.46 $\pm$ 9.76 &           152.59 $\pm$ 14.56 \\
LEGS-FIXED  &          136.84 $\pm$ 15.57 &            160.03 $\pm$ 1.81 \\
GCN       &          289.33 $\pm$ 15.75 &           303.52 $\pm$ 18.90 \\
GraphSAGE &          221.14 $\pm$ 42.56 &           219.44 $\pm$ 34.84 \\
GIN       &          221.14 $\pm$ 42.56 &           219.44 $\pm$ 34.84 \\
Baseline  &           393.78 $\pm$ 4.02 &           402.21 $\pm$ 21.45 \\
\bottomrule
\end{tabular}
\end{small}
\label{tab:casp}
\vspace{-5mm}
\end{table}

\begin{figure}[ht]
    \centering
    \includegraphics[width=0.8\linewidth]{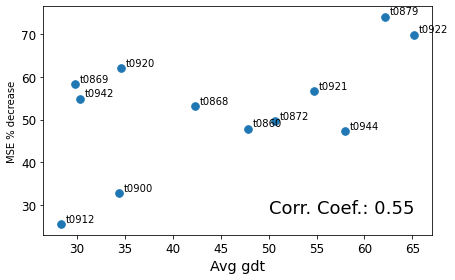}
    \vspace{-3mm}
    \caption{CASP dataset LEGS-FCN \% improvement over GCN in MSE of GDT prediction vs. Average GDT score.}
    \label{fig:casp}
\end{figure}

\begin{table}
    \caption{Mean $\pm$ std.\ over four runs of mean squared error over 19 targets for the QM9 dataset, lower is better.}
    \centering

\adjustbox{height=0.119\linewidth}{
\begin{tabular}{ll}
\toprule
$(\mu \pm \sigma)$ & Test MSE                    \\
\midrule
LEGS-FCN    &  \textbf{0.216 $\pm$ 0.009} \\
LEGS-FIXED  &  0.228 $\pm$ 0.019 \\
GraphSAGE   &  0.524 $\pm$ 0.224 \\
\bottomrule
\end{tabular}
}~\adjustbox{height=0.119\linewidth}{
\begin{tabular}{ll}
\toprule
$(\mu \pm \sigma)$ & Test MSE                    \\
\midrule
GCN         &  0.417 $\pm$ 0.061 \\
GIN         &  0.247 $\pm$ 0.037 \\
Baseline    &  0.533 $\pm$ 0.041 \\
\bottomrule
\end{tabular}
}
    \label{tab:qm9}
    \vspace{-5mm}
\end{table}

\subsection{Graph Regression}
We next evaluate learnable scattering on two graph regression tasks, the QM9~\citep{gilmer_neural_2017} graph regression dataset, and a new task from the critical assessment of structure prediction (CASP) challenge~\citep{moult_critical_2018}. In the CASP task, the main objective is to score protein structure prediction/simulation models in terms of the discrepancy between their predicted structure and the actual structure of the protein (which is known a priori). The accuracy of such 3D structure predictions are evaluated using a variety of metrics, but we focus on the global distance test (GDT) score~\citep{modi_assessment_2016}. The GDT score measures the similarity between tertiary structures of two proteins with amino-acid correspondence. A higher score means two structures are more similar. For a set of predicted 3D structures for a protein, we would like to quantify their quality by the GDT score.

For this task we use the CASP12 dataset~\citep{moult_critical_2018} and preprocess it similarly to \citet{ingraham_generative_2019}, creating a KNN graph between proteins based on 3D coordinates of each amino acid. From this KNN graph we regress against the GDT score. We evaluate on 12 proteins from the CASP12 dataset and choose random (but consistent) splits with 80\% train, 10\% validation, and 10\% test data out of 4000 total structures. We are interested in structure similarity and use no nonstructural node features.

\noindent\textbf{LEGS outperforms on all CASP targets.} Across all CASP targets we find that LEGS-based architectures significantly outperforms GNN and baseline models. This performance improvement is particularly stark on the easiest structures (measured by average GDT) but is consistent across all structures. In Fig.~\ref{fig:casp} we show the relationship between percent improvement of LEGS over the GCN model and the average GDT score across the target structures. We draw attention to target t0879, where LEGS shows the greatest improvement over other methods. Interestingly this target has particularly long-range dependencies~\citep{ovchinnikov_protein_2018}. Since other methods are unable to model these long-range connections, this suggests LEGS is particularly important on these more difficult to model targets.

\vspace{2pt}\noindent\textbf{LEGS outperforms on the QM9 dataset.} We evaluate the performance of LEGS-based networks on the quantum chemistry dataset QM9~\citep{gilmer_neural_2017}, which consists of 130,000 molecules with $\sim$18 nodes per molecule. We use the node features from \citet{gilmer_neural_2017}, with the addition of eccentricity and clustering coefficient features, and ignore the edge features. We whiten all targets to have zero mean and unit standard deviation. We train each network against all 19 targets and evaluate the mean squared error on the test set with mean and std.\ over four runs finding that LEGS improves over existing models (Tab.~\ref{tab:qm9}). 

\section{Conclusion}

Here we introduced a flexible geometric scattering module, that serves as an alternative to standard graph neural network architectures and is capable of learning rich multi-scale features. Our learnable geometric scattering module allows a task-dependent network to choose the appropriate scales of the multiscale graph diffusion wavelets that are part of the geometric scattering transform. We show that incorporation of this module yields improved performance on graph classification and regression tasks, particularly on biochemical datasets, while keeping strong guarantees on extracted features. This also opens the possibility to provide additional flexibility to the module to enable node-specific or graph-specific tuning via attention mechanisms, which are an exciting future direction, but out of scope for the current work.

\bibliographystyle{IEEEbib}
\bibliography{mlsp}

\end{document}